\def\eqref#1{equation~\ref{#1}}
\def\1{\bm{1}}
\def\rmK{{\mathbf{K}}}
\def\rmX{{\mathbf{X}}}
\def\vx{{\bm{x}}}
\def\vy{{\bm{y}}}
\DeclareMathAlphabet{\mathsfit}{\encodingdefault}{\sfdefault}{m}{sl}
\SetMathAlphabet{\mathsfit}{bold}{\encodingdefault}{\sfdefault}{bx}{n}
\def\gG{{\mathcal{G}}}
\newcommand{\E}{\mathbb{E}}
\newcommand{\R}{\mathbb{R}}
\declaretheorem[name=Theorem,numberwithin=section]{thm}
\newcommand{\conv}{*}
\newcommand{\diag}{\mathrm{diag}}
\newcommand{\twotwomat}{\mat{\Lambda}}
\newcommand{\tr}{\mathrm{tr}}
\newcommand{\trace}[1]{\mathrm{tr}\left(#1\right)}
\def\R{\mathbb{R}}
\newcommand{\mat}[1]{\bm{#1}}
\newcommand{\vect}[1]{\bm{#1}}
\DeclareMathOperator*{\expect}{\mathbb{E}}
\newcommand{\params}{\vect{\theta}}
\newcommand{\relu}[1]{\sigma\left(#1\right)}
\newcommand{\act}[1]{\sigma\left(#1\right)}
\newcommand{\deract}[1]{\dot{\sigma}\left(#1\right)}
\newcommand{\valpha}{\bm{\alpha}}
\newcommand{\talpha}{\widetilde{\alpha}}
\newcommand{\vtalpha}{\widetilde{\bm{\alpha}}}
\newcommand{\nnw}{{P}}
\newcommand{\nnh}{{Q}}
\newcommand{\nnc}{{C}}
\newcommand{\gauss}{\mathcal{N}}
\newcommand{\indset}{\mathcal{D}}
\newtheorem{cor}{Corollary}[section]
\newtheorem{defn}{Definition}[section]
\newcommand{\gap}{\textsf{GAP}}
\newcommand{\lap}{\textsf{LAP}}
\newcommand{\trans}{\mathcal{T}}
\newcommand{\flip}{\mathcal{F}}
\newcommand{\cnngp}{\textsf{CNN-GP}}
\newcommand{\cntk}{\textsf{CNTK}}
\newcommand{\ntk}{\textsf{NTK}}
\newcommand{\cnn}{\textsf{CNN}}
\newcommand{\fc}{\textsf{FC}}
\newcommand{\modular}{\mathrm{mod}}
\newcommand{\lapc}{c}
\newcommand{\boxblur}{\textsf{BBlur}}
\newenvironment{itemize*}%
{\begin{itemize}[leftmargin=*,topsep=0pt]%
		\setlength{\itemsep}{0pt}%
		\setlength{\parskip}{0pt}}%
	{\end{itemize}}
\newenvironment{enumerate*}%
{\begin{enumerate}[leftmargin=*,topsep=0pt]%
		\setlength{\itemsep}{0pt}%
		\setlength{\parskip}{0pt}}%
{\end{enumerate}}
\pgfplotsset{compat=newest}
\title{
Enhanced Convolutional Neural Tangent Kernels\thanks{The first three authors contribute equally.}}
\date{}
\author{Zhiyuan Li\thanks{Princeton University. Email: \texttt{zhiyuanli@cs.princeton.edu}. }
 \and Ruosong Wang\thanks{Carnegie Mellon University. Email: \texttt{ruosongw@andrew.cmu.edu}.}
 \and Dingli Yu\thanks{Princeton University. Email: \texttt{dingliy@cs.princeton.edu}.}
 \and Simon S. Du\thanks{Institute for Advanced Study. Email: \texttt{ssdu@ias.edu}.}
 \and Wei Hu\thanks{Princeton University. Email: \texttt{huwei@cs.princeton.edu}.}
 \and Ruslan Salakhutdinov\thanks{Carnegie Mellon University. Email: \texttt{rsalakhu@cs.cmu.edu}.}
 \and Sanjeev Arora\thanks{Princeton University and Institute for Advanced Study. Email: \texttt{arora@cs.princeton.edu}.}
 }
\begin{document}

\maketitle

\begin{abstract}
Recent research shows that for training with $\ell_2$ loss, convolutional neural networks (\cnn s) whose width (number of channels in convolutional layers) goes to infinity correspond to regression with respect to the \cnn~Gaussian Process kernel (\cnngp) if only the last layer is trained, and  correspond to regression with respect to  the Convolutional Neural Tangent Kernel (\cntk) if all layers are trained. An exact algorithm to compute \cntk~\citep{arora2019exact} yielded the finding that classification accuracy of \cntk~on CIFAR-10 is within $6$-$7\%$ of that of the corresponding \cnn~architecture (best figure being around $78$\%) which is interesting performance for a fixed kernel.

Here we show how to significantly enhance the performance of these kernels using two ideas. (1) Modifying the kernel using a new operation called {\em Local Average Pooling} (\lap) which preserves efficient computability of the kernel and inherits the spirit of   standard data augmentation using pixel shifts. Earlier papers were unable to incorporate naive data augmentation because of the quadratic training cost of kernel regression. This idea is inspired by {\em Global Average Pooling} (\gap), which we show for \cnngp~and \cntk~is equivalent to full translation data augmentation. (2) Representing the input image using a pre-processing technique proposed by~\citet{coates2011analysis}, which uses a single convolutional layer composed of random image patches.

On CIFAR-10, the resulting kernel, \cnngp~with \lap~and horizontal flip data augmentation, achieves $89\%$ accuracy, matching the performance of AlexNet~\citep{krizhevsky2012imagenet}. Note that this is the best such result we know of for a classifier that is not a trained neural network. Similar improvements are obtained for Fashion-MNIST.

\end{abstract}

\section{Introduction}
\label{sec:intro}

Recent research shows that for training with $\ell_2$ loss, convolutional neural networks (\cnn s) whose width (number of channels in convolutional layers) goes to infinity, correspond to regression with respect to the \cnn~Gaussian Process kernel (\cnngp) if only the last layer is trained, and  correspond to regression with respect to  the Convolutional Neural Tangent Kernel (\cntk) if all layers are trained~\citep{jacot2018neural}.
An exact algorithm was given \citep{arora2019exact} to compute \mbox{\cntk}~for \cnn~architectures, as well as those that include a {\em Global Average Pooling} (\gap) layer (defined below). This is a fixed kernel that inherits some benefits of \cnn s, including exploitation of locality via convolution, as well as multiple layers of processing. For CIFAR-10, incorporating \gap~into the kernel improves classification accuracy by up to $10\% $ compared to pure convolutional \cntk.

While this performance is encouraging for a fixed kernel, the best accuracy is still under $78\%$, which is disappointing even compared to AlexNet. One hope for improving the accuracy further is to somehow capture modern innovations such as batch normalization, data augmentation, residual layers, etc. in \cntk. The current paper shows how to incorporate simple data augmentation. Specifically, the idea of creating new training images from existing images using pixel translation and flips, while assuming that these operations should not change the label. Since deep learning uses stochastic gradient descent (SGD), it is trivial to do such data augmentation on the fly. However, it's unclear  how to efficiently incorporate data augmentation in kernel regression, since training time is quadratic in the number of training images.


Thus somehow data augmentation has to be incorporated into the computation of the kernel itself.  The main observation here is that the above-mentioned algorithm for computing \cntk~involves a dynamic programming whose recursion depth is equal to the depth of the corresponding finite \mbox{\cnn}. It is possible to impose symmetry constraints at any desired layer during this computation.
 In this viewpoint, it can be shown that prediction using \cntk/\cnngp~with \gap~is \mbox{equivalent} to prediction using \cntk/\cnngp~without \gap~but with \emph{full translation data augmentation} with wrap-around at the boundary.
The translation invariance property implicitly assumed in data augmentation is exactly equivalent to an imposed symmetry constraint in the computation of the \cntk~which in turn is derived from the pooling layer in the \cnn.  See Section~\ref{sec:gap} for more details.

Thus \gap~corresponds to full translation data augmentation scheme, but in practice
such data augmentation creates unrealistic images (cf. Figure~\ref{fig:gap_lap}) and training on them can harm performance.
However, the idea of incorporating symmetry in the dynamic programming leads to a variant we call \emph{Local Average Pooling} (\lap). This implicitly is like data  augmentation where image labels are assumed to be invariant to small translation, say by a few pixels.
This operation also suggests a new pooling layer for \cnn s which we call $\boxblur$ and also find it beneficial for \cnn s in  experiments. 

Experimentally, we find \lap~significantly enhances the performance as discussed below.
\begin{itemize}
	\item 	In extensive experiments on CIFAR-10 and Fashion-MNIST, we find that \lap~consistently improves performance of  \cnngp~and \cntk.
	In particular, we find \cnngp~with \lap~achieves $81\%$ on CIFAR-10 dataset, outperforming the best previous kernel predictor by $3\%$.
	\item When using the technique proposed by~\citet{coates2011analysis}, which uses  randomly sampled patches from training data as filters to do pre-processing,\footnote{See Section~\ref{sec:exp_cn} for the precise procedure.} \cnngp~with \lap~and horizontal flip data augmentation achieves $89\%$  accuracy on CIFAR-10, matching the performance of AlexNet~\citep{krizhevsky2012imagenet} and is the strongest classifier that is not a trained neural network.\footnote{
		https://benchmarks.ai/cifar-10
		}
\item We also derive a layer for \cnn~that corresponds to \lap~and observe that it improves the performance on certain architectures.
\end{itemize}

\section{Related Work}
\label{sec:rel}
Data augmentation has long been known to improve the performance of neural networks and kernel methods~\citep{sietsma1991creating,scholkopf1996incorporating}.
Theoretical study of data augmentation dates back to \cite{chapelle2001vicinal}.
Recently, \citet{dao2018kernel} proposed a theoretical framework for understanding data augmentation and showed data augmentation with a kernel classifier can have  feature averaging and variance regularization effects.
More recently, \citet{chen2019invariance} quantitatively shows in certain settings, data augmentation provably improves the classifier performance.
For more comprehensive discussion on data augmentation and its properties, we refer readers to \cite{dao2018kernel,chen2019invariance} and references therein.

\cnngp~and \cntk~correspond to infinitely wide \cnn~with different training strategies (only training the top layer or training all layers jointly).
The correspondence between infinite neural networks and kernel machines was first noted by~\citet{neal1996priors}.
More recently, this was extended to deep and convolutional neural networks~\citep{lee2018deep,matthews2018gaussian,novak2019bayesian,garriga-alonso2018deep}.
These kernels correspond to neural networks where only the last layer is trained.
A recent line of work studied overparameterized neural networks where all layers are trained~\citep{allen2018convergence,du2018provably,du2018global,li2018learning,zou2018stochastic}.
Their proofs imply the gradient kernel is close to a fixed kernel which only depends the training data and the neural network architecture.
These kernels thus correspond to neural networks where are all layers are trained.
\citet{jacot2018neural} named this kernel neural tangent kernel (\ntk).
\citet{arora2019exact} formally proved polynomially wide neural net predictor trained by gradient descent is equivalent to \ntk~predictor.
Recently, \ntk s induced by various neural network architectures are derived and shown to achieve strong empirical performance~\citep{arora2019exact,yang2019scaling,du2019graph}.

Global Average Pooling (\gap) is first proposed in~\citet{lin2013network} and is common in modern \cnn~design~\citep{springenberg2014striving, he2016deep, huang2017densely}.
However, current theoretical understanding on \gap~is still rather limited.
It has been conjectured in ~\citet{lin2013network} that \gap~reduces the number of parameters in the last fully-connected layer and thus avoids overfitting, and \gap~is more robust to spatial translations of the input since it sums out the spatial information.
In this work, we study \gap~from the \cnngp~and \cntk~perspective, and draw an interesting connection between \gap~and data augmentation.

The approach proposed in~\cite{coates2011analysis} is one of the best-performing approaches on CIFAR-10 preceding modern CNNs.
In this work we combine \cntk~with \lap~and the approach in~\cite{coates2011analysis} to achieve the best performance for classifiers that are not trained neural networks.

\vspace{-0.3cm}
\section{Preliminaries}
\label{sec:pre}
\subsection{Notation}
\label{sec:notations}
We use bold-faced letters for vectors, matrices and tensors.
For a vector $\vect a$, we use $[\vect a]_i$ to denote its $i$-th entry.
For a matrix $\mat A$, we use $\left[\mat A\right]_{i,j}$ to denote its $(i, j)$-th entry.
For an order $4$ tensor $\mat{T}$, we use $\mat \left[\mat T\right]_{i,j,i',j'}$ to denote its $(i,j,i',j')$-th entry.
For an order $4$ tensor, wet use $\tr\left(\mat{T}\right)$ to denote $\sum_{i,j}\mat{T}_{i,j,i,j}$.
For an order $d$ tensor $\mat{T} \in \mathbb{R}^{C_1 \times C_2 \times \ldots \times C_d}$ and an integer $\alpha \in [C_d]$, we use $\mat{T}_{(\alpha)} \in \mathbb{R}^{C_1 \times C_2 \times \ldots \times C_{d - 1}}$ to denote the order $d - 1$ tensor formed by fixing the coordinate of the last dimension to be $\alpha$.

\subsection{\cnn, \cnngp~and \cntk}
\label{sec:cnngp_cntk}
In this section we give formal definitions of \cnn, \cnngp~and \cntk~that we study in this paper.
Throughout the paper, we let $\nnw$ be the width and $\nnh$ be the height of the image.
We use $q \in \mathbb{Z}_+$ to denote the filter size. In practice, $q=1$, $3$, $5$ or $7$.

\paragraph{Padding Schemes.}
In the definition of \cnn, \cntk~and \cnngp, we may use different padding schemes.
Let $\vect{x} \in \mathbb{R}^{\nnw \times \nnh}$ be an image.
For a given index pair $(i, j)$ with $i \le 0$, $i \ge \nnw+1 $, $j \le 0$ or $j \ge \nnh+1$, different padding schemes define different value for $\left[\mat{x}\right]_{i,j}$.
For {\em circular padding}, we define $\left[\mat{x}\right]_{i,j}$ to be $\left[\mat{x}\right]_{i~\modular~\nnw, j~\modular~\nnh}$.
For {\em zero padding}, we simply define $\left[\mat{x}\right]_{i,j}$ to be $0$.
Note the difference between circular padding and zero padding occurs only on the boundary of images.
We will prove our theoretical results for the circular padding scheme to avoid boundary effects.

\paragraph{\cnn.} Now we describe \cnn~with and without \gap.
For any input image $\vect{x}$, after $L$ intermediate layers, we obtain $\vect{x}^{(L)} \in \mathbb{R}^{\nnw \times \nnh \times \nnc^{(L)}}$ where $\nnc^{(L)}$ is the number of channels of the last layer.
See Section~\ref{sec:cnngp_cntk_def} for the definition of $\vect{x}^{(L)}$.
For the output, there are two choices: with and without \gap.
\begin{itemize}
\item Without \gap: the final output is defined as
	\[
	f(\params,\vect{x}) = \sum_{\alpha=1}^{\nnc^{(L)}} \left\langle \mat{W}_{(\alpha)}^{(L+1)},\vect{x}_{(\alpha)}^{(L)}\right\rangle
	\]
	where $\vect{x}^{(L)}_{(\alpha)} \in \mathbb{R}^{\nnw \times \nnh}$, and $\mat{W}_{(\alpha)}^{(L+1)} \in \mathbb{R}^{\nnw \times \nnh}$ is the weight of the last fully-connected layer.
	
	\item With \gap: the final output is defined as
	\[
	f(\params,\vect{x}) = \frac{1}{PQ} \sum_{\alpha=1}^{\nnc^{(L)}}
	\mat{W}_{(\alpha)}^{(L+1)} \cdot \sum_{(i,j) \in [\nnw] \times [\nnh]} \left[\vect{x}_{(\alpha)}^{(L)}\right]_{i,j}
	\]
	where $\mat{W}_{(\alpha)}^{(L+1)} \in \mathbb{R}$ is the weight of the last fully-connected layer.
\end{itemize}

\paragraph{\cnngp~and \cntk.} Now we describe \cnngp~and \cntk.
Let $\vect{x},\vect{x}'$ be two input images.
We denote the $L$-th layer's \cnngp~kernel as $\mat{\Sigma}^{(L)}\left(\vect{x},\vect{x}'\right) \in \mathbb{R}^{[\nnw] \times [\nnh] \times [\nnw] \times [\nnh]}$ and  the $L$-th layer's \cntk~kernel as $\mat{\Theta}^{(L)}\left(\vect{x},\vect{x}'\right) \in \mathbb{R}^{[\nnw] \times [\nnh] \times [\nnw] \times [\nnh]}$.
See Section~\ref{sec:cnngp_cntk_def} for the precise definitions of $\mat{\Sigma}^{(L)}\left(\vect{x},\vect{x}'\right)$ and $\mat{\Theta}^{(L)}\left(\vect{x},\vect{x}'\right)$.
For the output kernel value, again, there are two choices, without \gap~(equivalent to using a fully-connected layer) or with \gap.
\begin{itemize}
\item Without \gap: the output of \cnngp~is \[ \mat{\Sigma}_{\fc}\left(\vect{x},\vect{x}'\right) =\trace{\mat{\Sigma}^{(L)}(\vect{x},\vect{x}')}\] and the output of \cntk~is \[\mat\Theta_{\fc}\left(\vect{x},\vect{x}'\right) = \trace{\mat{\Theta}^{(L)}(\vect{x},\vect{x}')}.\]
\item With \gap: the output of \cnngp~is \[\mat{\Sigma}_{\gap}\left(\vect{x},\vect{x}'\right)=\frac{1}{\nnw^2 \nnh^2}\sum\nolimits_{i,j,i',j' \in [\nnw]\times [\nnh]\times [\nnw]\times [\nnh]}\left[\mat{\Sigma}^{(L)}\left(\vect{x},\vect{x}'\right)\right]_{i,j,i',j'}\] and the output of \cntk~is \[\mat\Theta_{\gap}\left(\vect{x},\vect{x}'\right)= \frac{1}{\nnw^2 \nnh^2}\sum\nolimits_{i,j,i',j' \in [\nnw]\times [\nnh]\times [\nnw]\times [\nnh]}\left[\mat{\Theta}^{(L)}\left(\vect{x},\vect{x}'\right)\right]_{i,j,i',j'}.\]
\end{itemize}

\paragraph{Kernel Prediction.} Lastly, we recall the formula for kernel regression.
For simplicity, throughout the paper, we will assume all kernels are invertible.
Given a kernel $\rmK \left(\vect{x},\vect{x}'\right)$ and a dataset $(\mat{X}, \vect{y})$ with data $\left\{\left(\vect{x}_i,y_i\right)\right\}_{i = 1}^N$, define $\rmK_\rmX \in \mathbb{R}^{N \times N}$ to be $[\rmK_\rmX]_{i,j} = \rmK(\vx_i,\vx_j)$.
The prediction for an unseen data $\vx'$ is
$\sum_{i = 1}^N\alpha_i \rmK(\vx',\vect{x}_i)$ where $\valpha =  \rmK_\rmX^{-1} \vy.$

\subsection{Data Augmentation Schemes}
\label{sec:da}
In this paper we consider two types of data augmentation schemes: translation and horizontal flip.

\paragraph{Translation.}
Given $(i,j) \in [\nnw] \times [\nnh]$, we define the translation operator $\trans_{i, j}: \mathbb{R}^{\nnw \times \nnh \times \nnc} \rightarrow \mathbb{R}^{\nnw \times \nnh \times \nnc}$ as follow.
For an image $\vect{x} \in \mathbb{R}^{\nnw \times \nnh \times \nnc}$,
\[ \left[\trans_{i, j}\left(\vect{x}\right)\right]_{i',j', c} = \left[\vect{x}\right]_{i'+i,j'+j, c}\]
for $(i',j', c) \in [\nnw] \times [\nnh] \times [\nnc]$.
Here the precise definition of $ \left[\vect{x}\right]_{i'+i,j'+j,c}$ depends on the padding scheme.
Given a dataset $D = \left\{\left(\vect{x}_i,y_i\right)\right\}_{i=1}^N$, the \emph{full translation data augmentation scheme} creates a new dataset $D_{\trans} = \left\{\left( \trans_{i, j}\left(\vect{x}_i\right),y_i \right) \right\}_{(i,j,n) \in [\nnw] \times [\nnh] \times [N]}$ and training is performed on $D_{\trans}$.

\paragraph{Horizontal Flip.}
For an image $\vect{x} \in \mathbb{R}^{\nnw \times \nnh \times \nnc}$, the flip operator $\flip: \mathbb{R}^{\nnw \times \nnh \times \nnc} \rightarrow \mathbb{R}^{\nnw \times \nnh \times \nnc}$ is defined to be
\[\left[\flip\left(\vect{x}\right)\right]_{i,j,c} = \left[\vect{x}\right]_{\nnw + 1 -i,j,c}\]
for $(i,j,c) \in [\nnw] \times [\nnh] \times [\nnc]$.
Given a dataset $D = \left\{\left(\vect{x}_i,y_i\right)\right\}_{i=1}^N$, the {\em horizontal flip data augmentation scheme} creates a new dataset of the form $D_{\flip} = \left\{\left(\flip\left(\vect{x}_i\right),y_i\right)\right\}_{i=1}^N$ and training is performed on $D_{\flip} \cup D$.


\section{Equivalence Between Augmented Kernel and Data Augmentation}
\label{sec:gap}
In this section, we demonstrate the equivalence between data augmentation and augmented kernels.
To formally discuss the equivalence, we use group theory to describe translation  and horizontal flip operators.
We provide the definition of group in Section~\ref{sec:proofs} for completeness.

It is easy to verify that  $\{\mathcal{F}, \mathcal{I}\}$, $\{\mathcal{T}_{i,j}\}_{(i,j) \in [\nnw] \times [\nnh]}$,
$\{\mathcal{T}_{i,j}\circ \mathcal{F}\}_{(i,j) \in [\nnw] \times [\nnh]}\cup \{\mathcal{T}_{i,j}\}_{_{(i,j) \in [\nnw] \times [\nnh]}}$ are  groups, where $\mathcal{I}$ is the identity map.
From now on, given a dataset $(\rmX, \vect{y})$ with data $\left\{\left(\vect{x}_i,y_i\right)\right\}_{i=1}^N$ and a group $\mathcal{G}$, the augmented dataset $(\rmX_\gG, \vect{y}_\gG)$ is defined to be $\{g(\vect{x}_i), y_i\}_{g \in \mathcal{G}, i \in [N]}$.
The prediction for an unseen data $\vx'$ on the augmented dataset is $\sum_{i \in [N], g\in \gG}\talpha_{i, g} \rmK(\vx',g(\vect{x}_i))$ where $\vtalpha =  \left(\rmK_{\rmX_\gG}\right)^{-1} \vy_\gG.$

To proceed, we define the concept of {\em augmented kernel}.
Let $\gG$ be a finite group. Define the augmented kernel  $\rmK^{\gG}$ as \[\rmK^{\gG}(\vx,\vx') = \E_{g\in\gG} \rmK(g(\vx),\vx')\] where $\vx,\vx'$ are two inputs images and $g$ is drawn from $\gG$ uniformly at random.
A key observation is that for \cntk~and \cnngp, when circular padding and \gap~is adopted, the corresponding kernel is the augmented kernel of the group $\mathcal{G} =  \{\mathcal{T}_{i,j}\}_{(i,j) \in [\nnw] \times [\nnh]}$.
Formally, we have 
\[
\mat{\Sigma}_{\gap}\left(\vect{x},\vect{x}'\right) = \frac{1}{PQ}\mat{\Sigma}_{\fc}^{\gG}\left(\vect{x},\vect{x}'\right)
\] 
and 
\[
\mat{\Theta}_{\gap}\left(\vect{x},\vect{x}'\right) = \frac{1}{PQ} \mat{\Theta}_{\fc}^{\gG}\left(\vect{x},\vect{x}'\right),
\] which can be seen by checking the formula of these kernels and using definition of circular padding. 
Similarly, the following equivariance property holds for $\mat{\Sigma}_{\gap}, \mat{\Sigma}_{\fc}, \mat{\Theta}_{\gap}$ and $\mat{\Theta}_{\fc}$, under all groups mentioned above, including $\{\mathcal{F}, \mathcal{I}\}$ and $\{\mathcal{T}_{i,j}\}_{(i,j) \in [\nnw] \times [\nnh]}$.

\begin{defn}\label{def:inv}
    A kernel $\rmK$ is {\em equivariant} under a group $\gG$ if and only if for any $g\in \gG$, $\rmK(g(\vx), g(\vx')) = \rmK(\vx, \vx')$. 
\end{defn}
The following theorem formally states the equivalence between using an augmented kernel on the dataset and using the kernel on the augmented dataset.

\begin{thm}
\label{thm:gap_equivalence}
Given a group $\gG$ and a kernel $\rmK$ such that $\rmK$ is equivariant under $\gG$, then the prediction of augmented kernel $\rmK^\gG$ with dataset $(\rmX,\vy)$ is equal to that of kernel $\rmK$ and augmented dataset $(\rmX_{\gG},\vy_{\gG})$.
Namely, for any $\vx'\in \mathbb{R}^{P\times Q\times C}$, $\sum_{i= 1}^N\alpha_i \rmK^{\gG}(\vx',\vx_i)= \sum_{i \in [N], g\in \gG}\talpha_{i, g} \rmK(\vx',g(\vx_i))$ where $\valpha =  \left({\rmK_\rmX^{\gG}}\right)^{-1} \vy, \vtalpha =  \left(\rmK_{\rmX_\gG}\right)^{-1} \vy_\gG$.
\end{thm}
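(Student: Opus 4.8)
The plan is to solve explicitly for the dual coefficients $\vtalpha$ of the kernel-on-augmented-data predictor by a guess-and-verify argument, and then match them term-by-term against $\valpha$. Write $|\gG|$ for the order of the group. The one computational fact I would use repeatedly is a reindexing identity that follows purely from equivariance: applying Definition~\ref{def:inv} with the element $g^{-1}$ gives $\rmK(g(\vx_i), h(\vx_j)) = \rmK(\vx_i, (g^{-1}h)(\vx_j))$, so summing over $h\in\gG$ and relabelling $h'=g^{-1}h$ yields
\[
\sum_{h\in\gG}\rmK(g(\vx_i), h(\vx_j)) = \sum_{h'\in\gG}\rmK(\vx_i, h'(\vx_j)) = |\gG|\,\rmK^{\gG}(\vx_i,\vx_j),
\]
independent of $g$, where the last equality is the definition of $\rmK^{\gG}$ rewritten via equivariance. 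The same identity holds with $\vx_i$ replaced by a test point $\vx'$ (the case $g$ equal to the identity).

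Next I would guess that the augmented-data coefficients are flat across each group orbit, namely $\talpha_{j,h} = \alpha_j/|\gG|$ for every $h\in\gG$, where $\valpha = (\rmK_\rmX^\gG)^{-1}\vy$. To verify this solves $\rmK_{\rmX_\gG}\vtalpha = \vy_\gG$, I would compute the $(i,g)$-th coordinate of $\rmK_{\rmX_\gG}\vtalpha$:
\[
\sum_{j\in[N]}\sum_{h\in\gG}\rmK(g(\vx_i), h(\vx_j))\,\frac{\alpha_j}{|\gG|} = \sum_{j\in[N]}\alpha_j\,\rmK^{\gG}(\vx_i,\vx_j) = \big(\rmK_\rmX^\gG \valpha\big)_i = y_i,
\]
using the reindexing identity for the first equality and the definition of $\valpha$ for the last. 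Since $(\vy_\gG)_{(i,g)} = y_i$ for all $g$, the guessed vector satisfies the linear system, and invertibility of $\rmK_{\rmX_\gG}$ (assumed throughout) makes it the unique solution.

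Finally I would substitute this solution into the augmented-data prediction and collapse the inner sum with the same identity:
\[
\sum_{j\in[N]}\sum_{h\in\gG}\talpha_{j,h}\,\rmK(\vx', h(\vx_j)) = \sum_{j\in[N]}\frac{\alpha_j}{|\gG|}\sum_{h\in\gG}\rmK(\vx', h(\vx_j)) = \sum_{j\in[N]}\alpha_j\,\rmK^{\gG}(\vx',\vx_j),
\]
which is exactly the augmented-kernel prediction $\sum_{i}\alpha_i \rmK^\gG(\vx',\vx_i)$, completing the argument.

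The conceptual heart — and the only place where equivariance is essential — is the reindexing identity; everything else is linear algebra. The point requiring the most care is the consistency of the flat ansatz: it works precisely because $\vy_\gG$ assigns the same label to every group-translate of a given training image, so the right-hand side of the linear system is constant along each orbit and the system does not over-determine the $\alpha_j/|\gG|$. A secondary subtlety is invertibility of $\rmK_{\rmX_\gG}$: if $\gG$ has nontrivial stabilizers some augmented points coincide and the Gram matrix degenerates, but this is excluded by the standing assumption that all kernels are invertible (equivalently, one restricts to distinct orbit representatives).
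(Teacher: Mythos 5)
Your proposal is correct and follows essentially the same route as the paper's proof: both guess the flat ansatz $\talpha_{i,g}=\alpha_i/|\gG|$, verify via the equivariance-based reindexing that it solves $\rmK_{\rmX_\gG}\vtalpha=\vy_\gG$, invoke uniqueness from the assumed invertibility, and then collapse the prediction sum to the augmented-kernel prediction. Your explicit remarks on why the orbit-constant labels make the ansatz consistent, and on degeneracy under nontrivial stabilizers, are sensible elaborations the paper leaves implicit.
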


The proof is deferred to Appendix~\ref{sec:proofs}.
Theorem~\ref{thm:gap_equivalence} implies the following two corollaries. 
\begin{cor}\label{cor:gap}
  For $\gG = \{\mathcal{T}_{i,j}\}_{(i,j) \in [\nnw] \times [\nnh]}$, for any given dataset $D$, the prediction of $\mat\Sigma_\gap$ (or $\mat\Theta_\gap$) with dataset $D$ is equal to the prediction of $\mat\Sigma_\fc$ (or $\mat\Theta_\fc$) with augmented dataset $D_\mathcal T$.
\end{cor}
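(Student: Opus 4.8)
The plan is to derive the corollary as a direct specialization of Theorem~\ref{thm:gap_equivalence} to the translation group $\gG = \{\trans_{i,j}\}_{(i,j)\in[\nnw]\times[\nnh]}$, after disposing of the scalar prefactor that separates $\mat\Sigma_\gap$ from the augmented kernel $\mat\Sigma_\fc^{\gG}$. Two ingredients already recorded in the text are invoked verbatim: first, under circular padding $\mat\Sigma_\fc$ and $\mat\Theta_\fc$ are equivariant under $\gG$ (indeed under every group listed), so the hypothesis of Theorem~\ref{thm:gap_equivalence} is met with $\rmK=\mat\Sigma_\fc$ or $\rmK=\mat\Theta_\fc$; second, the identities $\mat\Sigma_\gap = \frac{1}{\nnw\nnh}\mat\Sigma_\fc^{\gG}$ and $\mat\Theta_\gap = \frac{1}{\nnw\nnh}\mat\Theta_\fc^{\gG}$ exhibit the \gap~kernel as a positive multiple of the $\gG$-augmented fully-connected kernel. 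I treat the $\mat\Sigma$ case, the $\mat\Theta$ case being identical.

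First I would observe that the augmented dataset $D_\gG$ produced by this particular $\gG$ is, by the definition of $D_\trans$ in Section~\ref{sec:da}, literally the full translation data augmentation dataset: $D_\gG = \{\trans_{i,j}(\vx_n),y_n\} = D_{\trans}$. Hence applying Theorem~\ref{thm:gap_equivalence} with $\rmK=\mat\Sigma_\fc$ immediately gives that the prediction of the augmented kernel $\mat\Sigma_\fc^{\gG}$ on $D$ equals the prediction of $\mat\Sigma_\fc$ on $D_{\trans}$.

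The remaining step is to check that rescaling a kernel by a positive constant $c=\frac{1}{\nnw\nnh}$ leaves the kernel-regression prediction unchanged. This is a one-line computation: if $\rmK'=c\,\rmK$ then $\valpha' = (\rmK'_\rmX)^{-1}\vy = \tfrac1c\,\rmK_\rmX^{-1}\vy$, so the prediction $\sum_i \alpha'_i\,\rmK'(\vx',\vx_i) = \sum_i \tfrac1c\alpha_i\cdot c\,\rmK(\vx',\vx_i)$ equals $\sum_i \alpha_i\,\rmK(\vx',\vx_i)$ and the factor cancels. Consequently the prediction of $\mat\Sigma_\gap=\frac{1}{\nnw\nnh}\mat\Sigma_\fc^{\gG}$ on $D$ coincides with that of $\mat\Sigma_\fc^{\gG}$ on $D$; chaining with the previous paragraph yields exactly the claimed equality. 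All of this tacitly uses the paper's standing assumption that the Gram matrices $\rmK_\rmX^{\gG}$ and $\rmK_{\rmX_\gG}$ are invertible, so that both predictors are well defined.

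The genuinely substantive facts are not the bookkeeping above but the two asserted inputs: (i) that the $\gG$-average of $\mat\Sigma_\fc$ reproduces $\frac{1}{\nnw\nnh}\mat\Sigma_\gap$, where circular padding is what makes the shifts $\mat\Sigma_\fc(\trans_{i,j}\vx,\vx')$ sum to the full double spatial sum defining $\mat\Sigma_\gap$, and (ii) the equivariance of $\mat\Sigma_\fc$ under $\gG$, which again relies on circular rather than zero padding to avoid boundary effects. Since both are stated earlier and may be taken as given, the only thing I must handle carefully is the normalization constant and the identification $D_\gG=D_\trans$; the real work is entirely contained in Theorem~\ref{thm:gap_equivalence}, so I expect no serious obstacle at the level of the corollary itself.
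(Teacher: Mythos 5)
Your proposal is correct and follows exactly the route the paper intends: specialize Theorem~\ref{thm:gap_equivalence} to the translation group using the stated equivariance and the identity $\mat\Sigma_\gap = \frac{1}{PQ}\mat\Sigma_\fc^{\gG}$ (under circular padding), then note that the scalar prefactor cancels in kernel regression. Your explicit verification that rescaling the kernel by a positive constant leaves the predictor unchanged is a detail the paper leaves implicit, but it is the same argument.
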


\begin{cor}\label{cor:flip}
  For $\gG = \{\mathcal F, \mathcal I\}$, for any given dataset $D$, the prediction of $\mat\Sigma_\gap^\gG$ (or $\mat\Theta_\gap^\gG$) with dataset $D$ is equal to the prediction of $\mat\Sigma_\gap$ (or $\mat\Theta_\gap$) with augmented dataset $D_\mathcal F\cup D$.
\end{cor}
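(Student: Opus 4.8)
The plan is to solve the augmented-dataset system in closed form rather than manipulating the two predictors abstractly. The augmented labels $\vy_\gG$ are constant along each group orbit $\{g(\vx_i)\}_{g\in\gG}$, which suggests that the optimal coefficients should be constant along orbits as well. I would therefore propose the candidate
\[
\talpha_{i,g} = \frac{\alpha_i}{\abs{\gG}}, \qquad i\in[N],\ g\in\gG,
\]
where $\valpha = (\rmK_\rmX^\gG)^{-1}\vy$, and then reduce the theorem to two checks: that this candidate satisfies $\rmK_{\rmX_\gG}\vtalpha = \vy_\gG$, and that substituting it into the augmented-dataset predictor reproduces $\sum_i \alpha_i \rmK^\gG(\vx',\vx_i)$. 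Since both $\rmK_\rmX^\gG$ and $\rmK_{\rmX_\gG}$ are assumed invertible, each linear system has a unique solution, so verifying that the candidate solves the second system identifies it with $\vtalpha$ and makes the prediction comparison legitimate.

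The single fact that drives every step is a re-indexing identity coming from equivariance together with the group axioms: for any inputs $\vx,\vx'$,
\[
\frac{1}{\abs{\gG}}\sum_{g\in\gG}\rmK(\vx, g(\vx')) = \rmK^\gG(\vx,\vx').
\]
I would prove this by applying equivariance under the element $g^{-1}$ to rewrite $\rmK(\vx, g(\vx')) = \rmK(g^{-1}(\vx), \vx')$, and then using that $g\mapsto g^{-1}$ is a bijection of $\gG$, so the sum equals $\sum_{g}\rmK(g(\vx),\vx') = \abs{\gG}\,\rmK^\gG(\vx,\vx')$ by definition of the augmented kernel. The same mechanism shows that a Gram entry of the augmented dataset, $[\rmK_{\rmX_\gG}]_{(j,h),(i,g)} = \rmK(h(\vx_j), g(\vx_i)) = \rmK(\vx_j, h^{-1}g(\vx_i))$, depends on $g,h$ only through $h^{-1}g$.

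With this in hand the two checks are short. For the system, I would fix a row $(j,h)$ and substitute the candidate; the inner sum over $g$ of $\rmK(\vx_j, h^{-1}g(\vx_i))$ is invariant under $g\mapsto h^{-1}g$, hence independent of $h$ and equal to $\abs{\gG}\,\rmK^\gG(\vx_j,\vx_i)$ by the identity, so $(\rmK_{\rmX_\gG}\vtalpha)_{(j,h)} = \sum_i \alpha_i \rmK^\gG(\vx_j,\vx_i) = (\rmK_\rmX^\gG\valpha)_j = y_j = [\vy_\gG]_{(j,h)}$. For the prediction, substituting the candidate at a test point $\vx'$ gives $\sum_{i,g}\tfrac{\alpha_i}{\abs{\gG}}\rmK(\vx', g(\vx_i)) = \sum_i \alpha_i\,\rmK^\gG(\vx',\vx_i)$, again by the identity, which is exactly the augmented-kernel predictor and completes the argument.

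I do not expect a genuine obstacle here; the proof is bookkeeping. The one place demanding care is to track consistently which argument of $\rmK$ the group acts on, since $\rmK^\gG$ averages the group over the first argument whereas the augmented dataset inserts $g(\vx_i)$ into the second, and to invoke closure and inverses at each re-indexing step; notably, the argument never needs symmetry of $\rmK$ in its two arguments, only equivariance and the group structure. A secondary caveat is that identifying the candidate with $\vtalpha$ relies on the standing invertibility assumption for $\rmK_{\rmX_\gG}$ (and $\rmK_\rmX^\gG$); without it one could only conclude equality of predictions for this particular choice of coefficients.
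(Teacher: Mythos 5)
Your argument is correct and is essentially the paper's own route: the paper derives this corollary by instantiating Theorem~\ref{thm:gap_equivalence} with $\gG=\{\mathcal F,\mathcal I\}$ and $\rmK=\mat\Sigma_\gap$ (or $\mat\Theta_\gap$), and the paper's proof of that theorem uses exactly your ansatz $\talpha_{i,g}=\alpha_i/\abs{\gG}$ together with the equivariance re-indexing $\rmK(g'(\vx_j),g(\vx_i))=\rmK((g^{-1}\circ g')(\vx_j),\vx_i)$ and uniqueness of the solution of the invertible augmented system. The only ingredient you leave implicit is the same one the paper asserts without detailed verification, namely that $\mat\Sigma_\gap$ and $\mat\Theta_\gap$ are equivariant under the flip group, so there is no substantive difference.
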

Now we discuss implications of Theorem~\ref{thm:gap_equivalence} and its corollaries.
Naively applying data augmentation, with full translation on \cntk~or \cnngp~for example, one needs to create a much larger kernel matrix since there are $\nnw \nnh$ translation operators, which is often computationally infeasible.
Instead, one can directly use the augmented kernel ($\mat\Sigma_\gap$ or $\mat\Theta_\gap$ for the case of full translation on \cntk~or \cnngp) for prediction, for which one only needs to create a kernel matrix that is as large as the original one. For horizontal flip, although the augmentation kernel can not be conveniently computed as full translation, Corollary~\ref{cor:flip} still provides a more efficient method for computing kernel values and solving kernel regression, since the augmented dataset is twice as large as the original dataset, while the kernel matrix of the augmented kernel is as large as the original one.

\section{Local Average Pooling}
\label{sec:lap}

\begin{figure}[t]
    \centering
    \begin{subfigure}[t]{0.35\textwidth}
        \centering
        \includegraphics[width=0.8\textwidth]{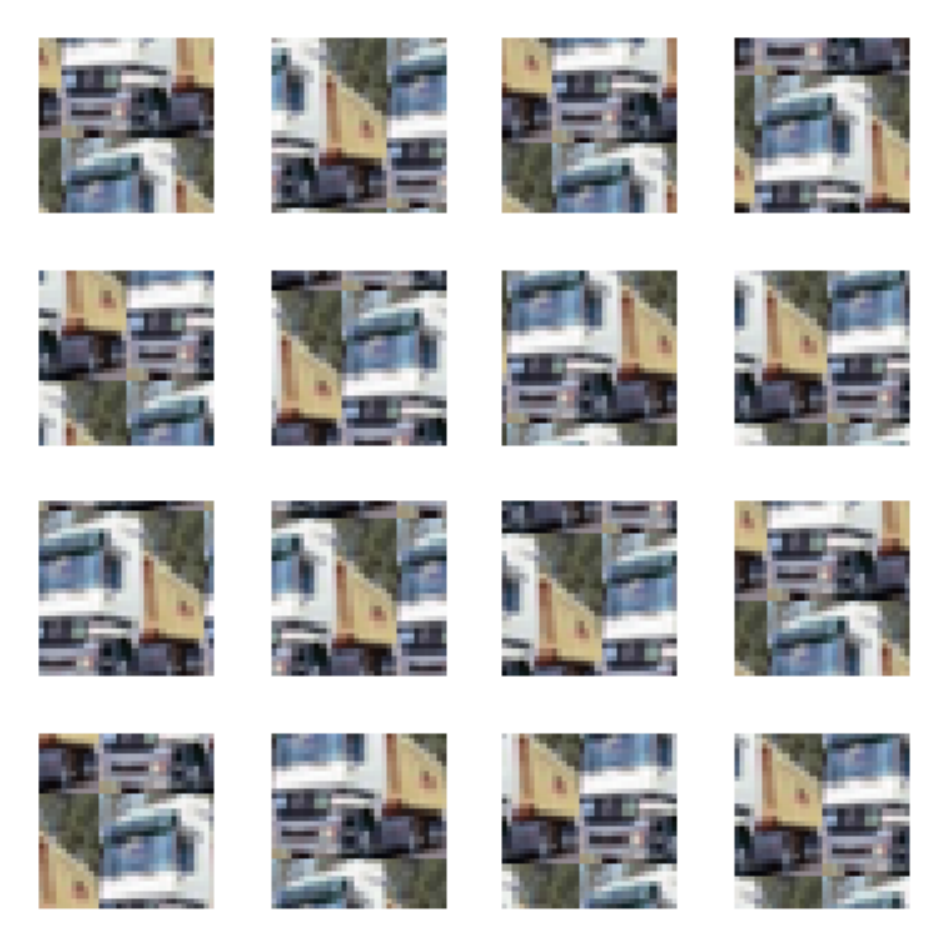}
        \caption{\gap}
    \end{subfigure}%
    ~
    \begin{subfigure}[t]{0.35\textwidth}
        \centering
        \includegraphics[width=0.8\textwidth]{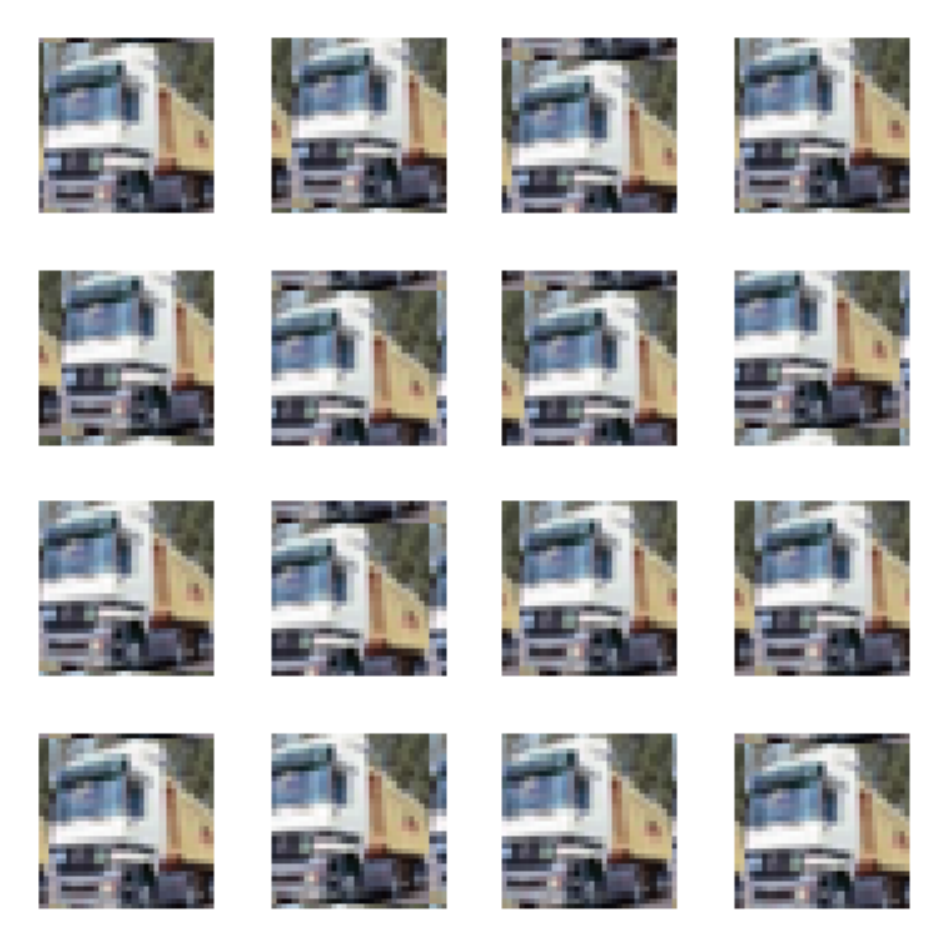}
        \caption{\lap~with $\lapc = 4$}
    \end{subfigure}
    \caption{Randomly sampled images with full translation data augmentation and local translation data augmentation from CIFAR-10.
    Full translation data augmentation can create unrealistic images that harm the performance whereas local translation data augmentation creates more realistic images.
    	}
    \label{fig:gap_lap}
\end{figure}
In this section, we introduce a new operation called {\em Local Average Pooling} (\lap).
As discussed in the introduction, full translation data augmentation may create unrealistic images.
A natural idea is to do local translation data augmentation, i.e., restricting the distance of translation.
More specifically, we only allow translation operations $\trans_{\Delta_i,\Delta_j}$ (cf. Section~\ref{sec:da}) for $(\Delta_i,\Delta_j)\in [-\lapc,\lapc] \times [-\lapc,\lapc]$ where $c$ is a parameter to control the amount of allowed translation.
With a proper choice of the parameter $c$, translation data augmentation will not create unrealistic images (cf. Figure~\ref{fig:gap_lap}).
However, naive local translation data augmentation is computationally infeasible for kernel methods, even for moderate choice of $c$.
To remedy this issue, in this section we introduce \lap, which is inspired by the connection between full translation data augmentation and \gap~on \cnngp~and \cntk.
Here, for simplicity, we assume $P = Q$ and derive the formula only for \cntk.
Our formula can be generalized to \cnngp~in a straightforward manner.


Recall that for two given images $\vect{x}$ and $\vect{x}'$, without \gap, the formula for output of \cntk~is $\trace{\mat{\Theta}(\vect{x},\vect{x}')}$.
With \gap, the formula for output of \cntk~is
\[\frac{1}{P^4}\sum_{i,j,i',j' \in [P]^4}\left[\mat{\Theta} \left(\vect{x},\vect{x}'\right)\right]_{i,j,i',j'}.\]
 With circular padding, the formula can be rewritten as 
 \[ \frac{1}{P^2} \expect_{\Delta_i, \Delta_{i}', \Delta_j, \Delta_{j}' \sim [P]^4} \sum_{i, j \in [P] \times [P]} \left[\mat{\Theta} \left(\vect{x},\vect{x}'\right)\right]_{i + \Delta_i,j + \Delta_j,i + \Delta_i',j + \Delta_j'},\]
 which is again equal to
 \[
   \frac{1}{P^2}\expect_{\Delta_i, \Delta_{i}', \Delta_j, \Delta_{j}' \sim [P]^4}
  \trace{\mat{\Theta}\left(\trans_{\Delta_i,\Delta_j}(\vect{x}),\trans_{\Delta_i',\Delta_j'}(\vect{x'})\right)}.
  \]
We ignore the $1 / P^2$ scaling factor since it plays no role in kernel regression.

Now we consider restricted translation operations $\trans_{\Delta_i,\Delta_j}$ with $(\Delta_i,\Delta_j)\in [-\lapc,\lapc] \times [-\lapc,\lapc]$ and derive the formula for \lap.
Assuming circular padding, we have
%
\begin{align}
& \expect\nolimits_{\Delta_i, \Delta_{i}', \Delta_j, \Delta_{j}' \sim [-\lapc,\lapc]^4}
  \trace{\mat{\Theta}\left(\trans_{\Delta_i,\Delta_j}(\vect{x}),\trans_{\Delta_i',\Delta_j'}(\vect{x'})\right)} \notag \\
 =~& \frac{1}{(2c+ 1)^4}\sum\nolimits_{\Delta_i, \Delta_{i}', \Delta_j, \Delta_{j}' \in[-\lapc,\lapc]^4}\sum\nolimits_{i, j\in [P]^2} \left[\mat{\Theta}(\vect{x},\vect{x'})\right]_{i + \Delta_i,j + \Delta_j,i + \Delta_i',j + \Delta_j'} \label{equ:lap}.
\end{align}

Now we have derived the formula for \lap, which is the RHS of Equation~\ref{equ:lap}.
Notice that the formula in the RHS of Equation~\ref{equ:lap} is a well-defined quantity for all padding schemes.
In particular, assuming zero padding, when $c = P$, \lap~is equivalent to \gap.
When $c = 0$, \lap~is equivalent to no pooling layer.
Another advantage of \lap~is that it does not incur any extra computational cost, since the formula in Equation~\ref{equ:lap} can be rewritten as
\[\sum_{i, j, i', j' \in [P]^4}  [\mat{w}]_{i, j, i', j'} \cdot \left[\mat{\Theta}(\vect{x},\vect{x'})\right]_{i, j, i', j'}\] where each entry in the weight tensor $\mat{w}$ can be calculated in constant time.

Note that the \gap~operation in \cnngp~and \cntk~corresponds to the \gap~layer~in \cnn s.
Here we observe that the following {\em box blur layer} corresponds to \lap~in \cnn s. Box blur layer ($\boxblur$) is a function $\R^{P\times Q} \to \R^{P\times Q}$ such that
\[[\boxblur(\vect{x})]_{i,j} = \frac{1}{(2c + 1)^2}\sum_{\Delta_i, \Delta_j \in [-\lapc,\lapc]^2} \vect{x}_{i + \Delta_i,j  + \Delta_j}.\] This is in fact the standard average pooling layer with pooling size $2c + 1$ and stride $1$. We prove the equivalence between \lap~and box blur layer in Appendix~\ref{sec:bblur}. 
In Section~\ref{sec:cnn_exp}, we verify the effectiveness of $\boxblur$~on \cnn s via experiments.

\section{Experiments}
\label{sec:exp}
In this section we present our empirical findings on CIFAR-10~\citep{krizhevsky2009learning} and Fashion-MNIST~\citep{xiao2017}.

\paragraph{Experimental Setup.}
For both CIFAR-10 and Fashion-MNIST we use the full training set and report the test accuracy on the full test set.
Throughout this section we only consider $3 \times 3$  convolutional filters with stride $1$ and no dilation.
In the convolutional layers in \cntk~and \cnngp, we use zero padding with pad size $1$ to ensure the input of each layer has the same size.
We use zero padding for \lap~throughout the experiment.
We perform standard preprocessing (mean subtraction and standard deviation division) for all images.

In all experiments, we perform kernel ridge regression to utilize the calculated kernel values\footnote{We also tried kernel SVM but found it significantly degrading the performance, and thus do not include the results.}.
We normalize the kernel matrices so that all diagonal entries are ones.
Equivalently, we ensure all features have unit norm in RKHS.
Since the resulting kernel matrices are usually ill-conditioned, we set the regularization term $\lambda = 5 \times 10^{-5}$, to make inverting kernel matrices numerically stable.
We use one-hot encodings of the labels as regression targets.
We use \texttt{scipy.linalg.solve} to solve the corresponding kernel ridge regression problem.

The kernel value of \cntk~and \cnngp~are calculated using the CuPy package.
We write native CUDA codes to speed up the calculation of the kernel values.
All experiments are performed on Amazon Web Services (AWS), using (possibly multiple) NVIDIA Tesla V100 GPUs.
For efficiency considerations, all kernel values are computed with 32-bit precision.

One unique advantage of the dynamic programming algorithm for calculating \cntk~and \cnngp~is that we do not need repeat experiments for, say, different values of $c$ in \lap~and different depths.
With our highly-optimized native CUDA codes, we spend roughly 1,000 GPU hours on calculating all kernel values for each dataset.

\subsection{Ablation Study on CIFAR-10 and Fashion-MNIST}\label{sec:exp_ablation}
We perform experiments to study the effect of different values of the $c$ parameter in \lap~and horizontal flip data argumentation on \cntk~and \cnngp.
For experiments in this section we set the bias term in \cntk~and \cnngp~to be $\gamma = 0$ (cf. Section~\ref{sec:cnngp_cntk_def}).
We use the same architecture for \cntk~and \cnngp~as in \cite{arora2019exact}.
I.e., we stack multiple convolutional layers before the final pooling layer.
We use $d$ to denote the number of convolutions layers, and in our experiments we set $d$ to be $5$, $8$, $11$ or $14$, to study the effect of depth on \cntk~and \cnngp.
For CIFAR-10, we set the $c$ parameter in \lap~to be $0, 4, \ldots, 32$, while for Fashion-MNIST we set the $c$ parameter in \lap~to be $0, 4, \ldots, 28$.
Notice that when $c = 32$ for CIFAR-10 or $c = 28$ for Fashion-MNIST, \lap~is equivalent to \gap, and when $c = 0$, \lap~is equivalent to no pooling layer.
Results on CIFAR-10 are reported in Tables~\ref{tab:cifar_cntk} and~\ref{tab:cifar_cnngp}, and results on Fashion-MNIST are reported in Tables~\ref{tab:fmnist_cntk} and~\ref{tab:fmnist_cnngp}.
In each table, for each combination of $c$ and $d$, the first number is the test accuracy without horizontal flip data augmentation (in percentage), and the second number (in parentheses) is the test accuracy with horizontal flip data augmentation.

\begin{table}
\centering
\begin{tabular}{|c|cccc|}
\hline
\diagbox{$c$}{$d$} & $5$ & $8$ & $11$ & $14$\\
\hline
0 &  $66.55$ $(69.87)$  &  $66.27$ $(69.87)$  &  $65.85$ $(69.37)$  &  $65.47$ $(68.90)$ \\
4 &  $77.06$ $(79.08)$  &  $77.14$ $(78.96)$  &  $77.06$ $(78.98)$  &  $76.52$ $(78.74)$ \\
8 &  $79.24$ $(80.95)$  &  $79.25$ $(81.03)$  &  $78.98$ $(80.94)$  &  $78.65$ $(80.35)$ \\
12 &  $\mathbf{80.11}$ $(81.34)$  &  $79.79$ $(81.28)$  &  $79.29$ $(81.14)$  &  $79.13$ $(80.91)$ \\
16 &  $79.80$ $(81.21)$  &  $79.71$ $(\mathbf{81.40})$  &  $79.74$ $(81.09)$  &  $79.42$ $(81.00)$ \\
20 &  $79.24$ $(80.67)$  &  $79.27$ $(80.88)$  &  $79.30$ $(80.76)$  &  $78.92$ $(80.39)$ \\
24 &  $78.07$ $(79.88)$  &  $78.16$ $(79.79)$  &  $78.14$ $(80.06)$  &  $77.87$ $(80.07)$ \\
28 &  $76.91$ $(78.69)$  &  $77.33$ $(79.20)$  &  $77.65$ $(79.56)$  &  $77.65$ $(79.74)$ \\
32 &  $76.79$ $(78.53)$  &  $77.39$ $(79.13)$  &  $77.63$ $(79.51)$  &  $77.63$ $(79.74)$ \\
\hline
\end{tabular}
\caption{Test accuracy of \cntk~on CIFAR-10.}
\label{tab:cifar_cntk}
\end{table}

\begin{table}
\centering
\begin{tabular}{|c|cccc|}
\hline
\diagbox{$c$}{$d$} & $5$ & $8$ & $11$ & $14$\\
\hline
0 &  $63.53$ $(67.90)$  &  $65.54$ $(69.43)$  &  $66.42$ $(70.30)$  &  $66.81$ $(70.48)$ \\
4 &  $76.35$ $(78.79)$  &  $77.03$ $(79.30)$  &  $77.39$ $(79.52)$  &  $77.35$ $(79.65)$ \\
8 &  $79.48$ $(81.32)$  &  $79.82$ $(81.49)$  &  $79.76$ $(81.71)$  &  $79.69$ $(81.53)$ \\
12 &  $80.40$ $(82.13)$  &  $80.64$ $(82.09)$  &  $80.58$ $(82.06)$  &  $80.32$ $(81.95)$ \\
16 &  $80.36$ $(81.73)$  &  $\mathbf{80.78}$ $(\mathbf{82.20})$  &  $80.59$ $(82.06)$  &  $80.41$ $(81.83)$ \\
20 &  $79.87$ $(81.50)$  &  $80.15$ $(81.33)$  &  $79.87$ $(81.46)$  &  $79.98$ $(81.35)$ \\
24 &  $78.60$ $(79.98)$  &  $78.91$ $(80.48)$  &  $79.22$ $(80.53)$  &  $78.94$ $(80.46)$ \\
28 &  $77.18$ $(78.84)$  &  $78.03$ $(79.86)$  &  $78.45$ $(79.87)$  &  $78.48$ $(80.07)$ \\
32 &  $77.00$ $(78.49)$  &  $77.85$ $(79.65)$  &  $78.49$ $(80.04)$  &  $78.45$ $(80.01)$ \\
\hline
\end{tabular}
\caption{Test accuracy of \cnngp~on CIFAR-10.}
\label{tab:cifar_cnngp}
\end{table}

\begin{table}[H]
\centering
\begin{tabular}{|c|cccc|}
\hline
\diagbox{$c$}{$d$} & $5$ & $8$ & $11$ & $14$\\
\hline
0 &  $92.25$ $(92.56)$  &  $92.22$ $(92.51)$  &  $92.11$ $(92.29)$  &  $91.76$ $(92.17)$ \\
4 &  $\mathbf{93.76}$ $(\mathbf{94.07})$  &  $93.69$ $(93.86)$  &  $93.55$ $(93.74)$  &  $93.37$ $(93.58)$ \\
8 &  $93.72$ $(93.96)$  &  $93.67$ $(93.78)$  &  $93.50$ $(93.58)$  &  $93.32$ $(93.51)$ \\
12 &  $93.59$ $(93.80)$  &  $93.58$ $(93.70)$  &  $93.35$ $(93.44)$  &  $93.21$ $(93.40)$ \\
16 &  $93.50$ $(93.62)$  &  $93.42$ $(93.63)$  &  $93.27$ $(93.40)$  &  $93.10$ $(93.25)$ \\
20 &  $93.10$ $(93.34)$  &  $93.17$ $(93.49)$  &  $93.20$ $(93.34)$  &  $92.99$ $(93.18)$ \\
24 &  $92.77$ $(93.04)$  &  $93.07$ $(93.44)$  &  $93.11$ $(93.31)$  &  $93.02$ $(93.21)$ \\
28 &  $92.80$ $(92.98)$  &  $93.08$ $(93.42)$  &  $93.12$ $(93.28)$  &  $92.97$ $(93.19)$ \\
\hline
\end{tabular}
\caption{Test accuracy of \cntk~on Fashion-MNIST.}
\label{tab:fmnist_cntk}
\end{table}

\begin{table}[H]
\centering
\begin{tabular}{|c|cccc|}
\hline
\diagbox{$c$}{$d$} & $5$ & $8$ & $11$ & $14$\\
\hline
0 &  $91.47$ $(91.81)$  &  $91.96$ $(92.37)$  &  $92.09$ $(92.60)$  &  $92.22$ $(92.72)$ \\
4 &  $93.44$ $(93.60)$  &  $93.59$ $(\mathbf{93.79})$  &  $\mathbf{93.63}$ $(93.76)$  &  $93.59$ $(93.64)$ \\
8 &  $93.26$ $(93.16)$  &  $93.41$ $(93.51)$  &  $93.31$ $(93.52)$  &  $93.39$ $(93.46)$ \\
12 &  $92.83$ $(92.94)$  &  $93.07$ $(93.20)$  &  $93.11$ $(93.15)$  &  $92.94$ $(93.09)$ \\
16 &  $92.46$ $(92.51)$  &  $92.58$ $(92.83)$  &  $92.64$ $(92.92)$  &  $92.68$ $(93.07)$ \\
20 &  $91.83$ $(91.72)$  &  $92.35$ $(92.42)$  &  $92.49$ $(92.79)$  &  $92.51$ $(92.69)$ \\
24 &  $91.15$ $(91.40)$  &  $92.10$ $(92.18)$  &  $92.29$ $(92.60)$  &  $92.41$ $(92.77)$ \\
28 &  $91.30$ $(91.37)$  &  $92.03$ $(92.27)$  &  $92.41$ $(92.79)$  &  $92.41$ $(92.74)$ \\
\hline
\end{tabular}
\caption{Test accuracy of \cnngp~on Fashion-MNIST.}
\label{tab:fmnist_cnngp}
\end{table}

We made the following observations regarding our experimental results.

\begin{itemize}
\item \lap~with a proper choice of the parameter $c$ significantly improves the performance of \cntk~and \cnngp.
On CIFAR-10, the best-performing value of $c$ is $c = 12$ or $16$, while on Fashion-MNIST the best-performing value of $c$ is $c = 4$.
We suspect this difference is due to the nature of the two datasets: CIFAR-10 contains real-life images and thus allow more translation, while Fashion-MNIST contains images with centered clothes and thus allow less translation.
For both datasets, the best-performing value of $c$ is consistent across all settings (depth, \cntk~or \cnngp) that we have considered.
\item Horizontal flip data augmentation is less effective on Fashion-MNIST than on CIFAR-10.
There are two possible explanations for this phenomenon. First, most images in Fashion-MNIST are nearly horizontally symmetric (e.g., T-shirts and bags).
Second, \cntk~and \cnngp~have already achieved a relatively high accuracy on Fashion-MNIST, and thus it is reasonable for horizontal flip data augmentation to be less effective on this dataset.
\item Finally, for \cntk, when $c = 0$ (no pooling layer) and $c = 32$ (\gap) our reported test accuracies are close to those in~\cite{arora2019exact} on CIFAR-10. For \cnngp, when $c = 0$ (no pooling layer) our reported test accuracies are close to those in~\cite{novak2019bayesian} on CIFAR-10 and Fashion-MNIST.
This suggests that we have reproduced previous reported results.
\end{itemize}

\subsection{Improving Performance on CIFAR-10 via Additional Pre-processing}\label{sec:exp_cn}
Finally, we explore another interesting question: what is the limit of non-deep-neural-network methods on CIFAR-10?
To further improve the performance, we combine \cntk~and \cnngp~with \lap, together with the previous best-performing non-deep-neural-network method~\cite{coates2011analysis}.
Here we use the variant implemented in~\cite{recht2019imagenet}\footnote{\url{https://github.com/modestyachts/nondeep}}.
More specifically, we first sample {\em 2048} random image patches with size $5 \times 5$ from all training images.
Then for the sampled images patches, we subtract the mean of the patches, then normalize them to have unit norm, and finally perform ZCA transformation to the resulting patches.
We use the resulting patches as 2048 filters of a convolutional layer with kernel size $5$, stride $1$ and no dilation or padding.
For an input image $\vx$, we use $\mathtt{conv}(\vx)$ to denote the output of the convolutional layer.
As in the implementation in~\cite{recht2019imagenet}, we use $\mathrm{ReLU}(\mathtt{conv}(\vx) - \gamma_{\mathrm{feature}})$ and $\mathrm{ReLU}(-\mathtt{conv}(\vx) - \gamma_{\mathrm{feature}})$ as the input feature for \cntk~and~\cnngp.
Here we fix $\gamma_{\mathrm{feature}} = 1$ as in~\cite{recht2019imagenet}, and set the bias term $\gamma$ in \cntk~and \cnngp~to be $\gamma = 3$, which is the filter size used in \cntk~and~\cnngp.
To make the equivariant  under horizontal flip (cf. Defintion~\ref{def:inv}), for each image patch, we horizontally flip it and add the flipped patch into the convolutional layer as a new filter.
Thus, for an input CIFAR-10 image of size $32 \times 32$, the dimension of the output feature is $8192 \times 28 \times 28$.
To isolate the effect of randomness in the choices of the image patches, we fix the random seed to be 0 throughout the experiment.
In this experiment, we set the value of the $c$ parameter in \lap~to be $4, 8, 12, \ldots, 20$ to avoid small and large values of $c$.
The results are reported in Tables~\ref{tab:cn_cntk} and~\ref{tab:cn_cnngp}.
In each table, for each combination of $c$ and $d$, the first number is the test accuracy without horizontal flip data augmentation (in percentage), and the second number (in parentheses) is the test accuracy with horizontal flip data augmentation (again in percentage).

\begin{table}
\centering
\begin{tabular}{|c|cccc|}
\hline
\diagbox{$c$}{$d$} & $5$ & $8$ & $11$ & $14$\\
\hline
4 &  $84.63$ $(86.64)$  &  $84.07$ $(86.23)$  &  $83.29$ $(85.53)$  &  $82.57$ $(84.81)$ \\
8 &  $86.36$ $(88.32)$  &  $85.80$ $(87.81)$  &  $85.01$ $(87.08)$  &  $84.57$ $(86.53)$ \\
12 &  $86.74$ $(88.35)$  &  $86.20$ $(87.90)$  &  $85.60$ $(87.36)$  &  $84.95$ $(86.99)$ \\
16 &  $\mathbf{86.77}$ $(\mathbf{88.36})$  &  $86.17$ $(87.85)$  &  $85.60$ $(87.44)$  &  $84.92$ $(86.98)$ \\
20 &  $86.17$ $(87.77)$  &  $85.71$ $(87.50)$  &  $85.14$ $(87.07)$  &  $84.59$ $(86.84)$ \\
\hline
\end{tabular}
\caption{Test accuracy of additional pre-processing + \cntk~on CIFAR-10.}
\label{tab:cn_cntk}
\end{table}

\begin{table}
\centering
\begin{tabular}{|c|cccc|}
\hline
\diagbox{$c$}{$d$} & $5$ & $8$ & $11$ & $14$\\
\hline
4 &  $85.49$ $(87.32)$  &  $85.37$ $(87.22)$  &  $85.16$ $(87.11)$  &  $84.79$ $(86.81)$ \\
8 &  $87.07$ $(88.64)$  &  $86.82$ $(88.68)$  &  $86.53$ $(88.40)$  &  $86.39$ $(88.15)$ \\
12 &  $87.23$ $(88.91)$  &  $87.12$ $(\mathbf{88.92})$  &  $86.87$ $(88.66)$  &  $86.62$ $(88.29)$ \\
16 &  $\mathbf{87.28}$ $(88.90)$  &  $87.11$ $(88.66)$  &  $86.92$ $(88.61)$  &  $86.74$ $(88.24)$ \\
20 &  $86.81$ $(88.26)$  &  $86.77$ $(88.24)$  &  $86.61$ $(88.14)$  &  $86.26$ $(87.84)$ \\
\hline
\end{tabular}
\caption{Test accuracy of additional pre-processing + \cnngp~on CIFAR-10.}
\label{tab:cn_cnngp}
\end{table}

From our experimental results, it is evident that combining \cntk~or \cnngp~with additional pre-processing can significantly improve upon the performance of using solely \cntk~or~\cnngp, and that of using solely the approach in~\cite{coates2011analysis}.
Previously, it has been reported in~\cite{recht2019imagenet} that using solely the approach in ~\cite{coates2011analysis} (together with appropriate pooling layer) can only achieve a test accuracy of 84.2\% using {\em 256, 000} image patches, or 83.3\% using {\em 32, 000} image patches.
Even with the help of horizontal flip data augmentation, the approach in~\cite{coates2011analysis} can only achieve a test accuracy of 85.6\% using {\em 256, 000} image patches, or 85.0\% using {\em 32, 000} image patches.
Here we use significantly less image patches (only {\em 2048}) but achieve a much better performance, with the help of \cntk~and \cnngp.
In particular, we achieve a performance of 88.92\% on CIFAR-10, matching the performance of AlexNet on the same dataset.
In the setting reported in~\cite{coates2011analysis}, increasing the number of sampled image patches will further improve the performance.
Here we also conjecture that in our setting, further increasing the number of sampled image patches can improve the performance and get close to modern \cnn s.
However, due the limitation on computational resources, we leave exploring the effect of number of sampled image patches as a future research direction.

\subsection{Experiments on \cnn~with $\boxblur$}\label{sec:exp_cnn}
\label{sec:cnn_exp}
In Figure~\ref{fig:concave_curve}, we verify the effectiveness of $\boxblur$ on a 10-layer \cnn~(with Batch Normalization) on CIFAR-10.
The setting of this experiment is reported in Appendix~\ref{sec:exp_cnn_setting}.
Our network structure has no pooling layer except for the $\boxblur$ layer before the final fully-connected layer.
The fully-connected layer is fixed during the training.
Our experiment illustrates that even with a fixed final FC layer, using \gap~could improve the performance of \cnn, and challenges the conjecture that \gap~reduces the number of parameters in the last fully-connected layer and thus avoids overfitting.
Our experiments also show that $\boxblur$ with appropriate choice of $c$ achieves better performance than \gap.

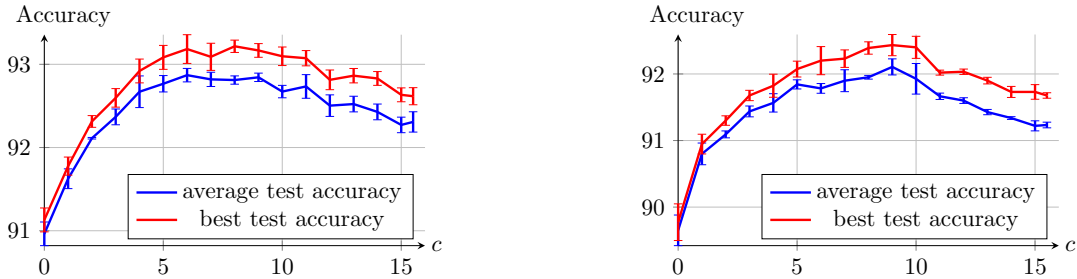
\begin{figure}
    \centering
    \begin{subfigure}[t]{0.5\textwidth}
        \centering
        \begin{tikzpicture}[thick,scale=0.9, every node/.style={scale=0.9}]
            \begin{axis}[
                  x label style={at={(axis description cs:1.03,0.06)},anchor=north},
                  y label style={at={(axis description cs:0.05,1)},rotate=270,anchor=south},
                      xlabel     = $c$, 
                      ylabel     = Accuracy, 
                      axis lines = left, 
                      clip       = false,
                      xmax=16,
                      x=10,
                      y=35,
                      grid=major,
                      legend pos=south east,
                    ]
            \addplot [color=blue,line width=1pt]
             plot [error bars/.cd, y dir = both, y explicit, error bar style={line width=1pt,solid}]
             table[y error index=2]{\jobname1.dat};
            \addplot [color=red,line width=1pt]
             plot [error bars/.cd, y dir = both, y explicit, error bar style={line width=1pt,solid}]
             table[y error index=2]{\jobname2.dat};
             \legend{average test accuracy, best test accuracy}
            \end{axis}
        \end{tikzpicture}
        \caption{With Horizontal Flip Data Augmentation}
    \end{subfigure}%
    ~
    \begin{subfigure}[t]{0.5\textwidth}
        \centering
        \begin{tikzpicture}[thick,scale=0.9, every node/.style={scale=0.9}]
        \begin{axis}[
                  x label style={at={(axis description cs:1.03,0.06)},anchor=north},
                  y label style={at={(axis description cs:0.05,1)},rotate=270,anchor=south},
                  xlabel     = $c$, 
                  ylabel     = Accuracy, 
                  axis lines = left, 
                  clip       = false,
                  xmax=16,
                  x=10,
                  y=28,
                  grid=major,
                  legend pos=south east,
                ]
        \addplot [color=blue,line width=1pt]
         plot [error bars/.cd, y dir = both, y explicit, error bar style={line width=1pt,solid}]
         table[y error index=2]{\jobname3.dat};
        \addplot [color=red,line width=1pt]
         plot [error bars/.cd, y dir = both, y explicit, error bar style={line width=1pt,solid}]
         table[y error index=2]{\jobname4.dat};
         \legend{average test accuracy, best test accuracy}
        \end{axis}
        \end{tikzpicture}
        \caption{Without Horizontal Flip Data Augmentation}
    \end{subfigure}
    \caption{
    Test accuracy of 10-layer \cnn~with various values for the $c$ parameter in $\boxblur$.}
    \label{fig:concave_curve}
\end{figure}

\section{Conclusion}
\label{sec:con}
In this paper, inspired by the connection between full translation data augmentation and \gap, we derive a new operation, \lap, on \cntk~and \cnngp, which consistently improves the performance on image classification tasks. Combining \cnngp~with \lap~and the pre-processing technique proposed by~\citet{coates2011analysis}, the resulting kernel achieves 89\% accuracy on CIFAR-10, matching the performance of AlexNet and is the strongest classifier that is not a trained neural network.

Here we list a few future research directions. Is that possible to combine more modern techniques on \cnn, such as batch normalization and residual layers, with \cntk~or \cnngp, to further improve the performance? Moreover, it is an interesting direction to study other components in modern \cnn s through the lens of \cntk~and \cnngp.

\section*{Acknowledgements}
S. Arora, W. Hu, Z. Li and D. Yu are supported by NSF, ONR, Simons Foundation, Schmidt Foundation, Amazon Research, DARPA and SRC.
S. S. Du is supported by National Science Foundation (Grant No. DMS-1638352) and the Infosys Membership.
R. Salakhutdinov and R. Wang are supported in part by NSF IIS-1763562, Office of Naval Research grant N000141812861, and Nvidia NVAIL award. 
Part of this work was done while R. Wang was visiting Princeton University.
The authors would like to thank Amazon Web Services for providing compute time for the experiments in this paper.

\bibliography{simonduref}
\bibliographystyle{plainnat}
\newpage
\appendix
\section{Formal Definitions of \cnn, \cnngp~and \cntk}
\label{sec:cnngp_cntk_def}
In this section we use the following additional notations.
Let $\mat I$ be the identity matrix, and $[n]=\{1, 2, \ldots, n\}$.
Let $\vect{e}_{i}$ be an indicator vector with $i$-th entry being $1$ and other entries being $0$, and let $\vect{1}$ denote the all-one vector.
We use $\odot$ to denote the pointwise product and $\otimes$ to denote the tensor product.
We use $\diag(\cdot)$ to transform a vector to a diagonal matrix.
We use $\relu{\cdot}$ to denote the activation function, such as the rectified linear unit (ReLU) function: $\relu{z} = \max\{z, 0\}$, and $\dot{\sigma}\left(\cdot\right)$ to denote the derivative of $\relu{\cdot}$.
Moreover, $c_{\sigma}$ is a fixed constant.
Denote by $\gauss(\bm\mu, \mat\Sigma)$ the Gaussian distribution with mean $\bm\mu$ and covariance $\mat{\Sigma}$.

We first define the convolution operation. 
For a convolutional filter $\vect{w} \in \mathbb{R}^{q \times q}$ and an image $\vect{x} \in \mathbb{R}^{\nnw \times \nnh}$, the convolution operator is defined as
\begin{align}
[\vect{w}\conv \vect{x}]_{i, j} = \sum_{a = -\frac{q-1}{2}}^{\frac{q-1}{2}} \sum_{b = -\frac{q-1}{2}}^{\frac{q-1}{2}}  \left[\mat{w}\right]_{a+\frac{q+1}{2},b+\frac{q+1}{2}} [\vect{x}]_{a+i,b+j}\text{ for }i \in [\nnw], j \in [\nnh]. \label{eqn:conv}
\end{align}

Here the precise definition of $\left[\mat{w}\right]_{a+\frac{q+1}{2},b+\frac{q+1}{2}}$ and $[\vect{x}]_{a+i,b+j}$ depends on the padding scheme (cf. Section~\ref{sec:cnngp_cntk}).
Notice that in Equation~\ref{eqn:conv}, the value of $[\vect{w}\conv \vect{x}]_{i, j}$ depends on $[\vect{x}]_{i-\frac{q-1}{2}:i+\frac{q-1}{2},  j-\frac{q-1}{2}: j+\frac{q-1}{2}}$.
Thus, for $(i,j,i',j') \in [\nnw]\times [\nnh] \times [\nnw] \times [\nnh]$, we define 
\[
\indset_{ij,i'j'} = \left\{(i+a,j+b,i'+a',j'+b') \in [\nnw]\times [\nnh] \times [\nnw] \times [\nnh] \mid -(q - 1) / 2 \le a, b, a', b' \le (q - 1) /2
\right\}.
\]

Now we formally define \cnn, \cnngp~and~\cntk.

\paragraph{\cnn.}
\begin{itemize}
	\item Let  $\vect{x}^{(0)} =\vect{x} \in \mathbb{R}^{\nnw\times \nnh \times \nnc^{(0)}}$ be the input image where $\nnc^{(0)}$ is the initial number of channels.
	\item For $h=1,\ldots,L$, $\beta = 1,\ldots,\nnc^{(h)}$, the intermediate outputs are defined as \begin{align*}
	\tilde{\vect{x}}_{(\beta)}^{(h)} = \sum_{\alpha=1}^{\nnc^{(h-1)}} \mat{W}_{(\alpha),(\beta)}^{(h)} \conv \vect{x}_{(\alpha)}^{(h-1)}  + \gamma \cdot b_{(\beta)} ,\quad
	\vect{x}^{(h)}_{(\beta)} = \sqrt{\frac{c_{\sigma}}{\nnc^{(h)} \times q \times q}}\act{\tilde{\vect{x}}_{(\beta)}^{(h)}}
	\end{align*}
	where each $\mat{W}_{(\alpha),(\beta)}^{(h)} \in \mathbb{R}^{q \times q}$ is a filter with Gaussian initialization, $b_{(\beta)}$ is a bias term with Gaussian initialization, and $\gamma$ is the scaling factor for the bias term.
\end{itemize}

\paragraph{\cnngp~and~\cntk.}
\begin{itemize}
	\item For $\alpha = 1,\ldots,\nnc^{(0)}$, $(i,j,i',j') \in [\nnw] \times [\nnh] \times [\nnw] \times [\nnh]$, define \begin{align*}
	\mat{K}^{(0)}_{(\alpha)}\left(\vect{x},\vect{x}'\right) = \vect{x}_{(\alpha)} \otimes \vect{x}_{(\alpha)}' \text{ and }\left[\mat{\Sigma}^{(0)}(\vect{x},\vect{x}')\right]_{ij,i'j'} = \sum_{\alpha=1}^{\nnc^{(0)}}\tr\left(\left[\mat{K}^{(0)}_{(\alpha)}(\vect{x},\vect{x}')\right]_{ \indset_{ij,i'j'}}\right) + {\gamma^2}.
	\end{align*}
	
	\item For $h \in [L - 1]$, \begin{itemize}
		\item For $(i,j,i',j') \in [\nnw] \times [\nnh] \times [\nnw] \times [\nnh]$, define \begin{align*}
		\mat{\twotwomat}_{ij,i'j'}^{(h)}(\vect{x},\vect{x}') = \begin{pmatrix}
		\left[\mat{\Sigma}^{(h-1)}(\vect{x},\vect{x})\right]_{ij,ij} & \left[\mat{\Sigma}^{(h-1)}(\vect{x},\vect{x}')\right]_{ij,i'j'} \\
		\left[\mat{\Sigma}^{(h-1)}\left(\vect{x}',\vect{x}\right)\right]_{i'j',ij}&
		\left[\mat{\Sigma}^{(h-1)}\left(\vect{x}',\vect{x}'\right)\right]_{i'j',i'j'}
		\end{pmatrix} \in \mathbb{R}^{2 \times 2}. 
		\end{align*}
		\item For  $(i,j,i',j') \in [\nnw] \times [\nnh] \times [\nnw] \times [\nnh]$, define
		\begin{align}
		\left[\mat{K}^{(h)}(\vect{x},\vect{x}')\right]_{ij,i'j'} = &\frac{c_{\sigma}}{q^2} \cdot  \expect_{(u,v)\sim \gauss\left(\vect{0},\mat{\twotwomat}_{ij,i'j'}^{(h)}(\vect{x},\vect{x}')\right)}\left[\act{u}\act{v}\right], \label{eqn:vanila_cnn_exp}\\
		\left[\dot{\mat{K}}^{(h)}(\vect{x},\vect{x}')\right]_{ij,i'j'} = &\frac{c_{\sigma}}{q^2} \cdot \expect_{(u,v)\sim \gauss\left(\vect{0},\mat{\twotwomat}_{ij,i'j'}^{(h)}(\vect{x},\vect{x}')\right)}\left[\deract{u}\deract{v}\right]. \label{eqn:vanila_cnn_exp_d}
		\end{align}
		\item For $(i,j,i',j') \in [\nnw] \times [\nnh] \times [\nnw] \times [\nnh]$, define 
		\begin{align*}
		\left[\mat{\Sigma}^{(h)}(\vect{x},\vect{x}')\right]_{ij,i'j'} = &\tr\left(\left[\mat{K}^{(h)}(\vect{x},\vect{x}')\right]_{D_{ij,i'j'}}\right) + {\gamma^2}. 
		\end{align*}
		\end{itemize}
		\end{itemize}
		Note that the definition of $\mat{\Sigma}(\vect{x},\vect{x}')$ and $\dot{\mat{\Sigma}}(\vect{x},\vect{x}')$ share similar patterns as their \ntk~counterparts~\citep{jacot2018neural}.
		The only difference is that we have one more step, taking the trace over patches.
		This step represents the convolution operation in the corresponding \cnn.
		Now we can define the kernel value recursively.
		\begin{enumerate}
			\item First, we define  $\mat{\Theta}^{(0)}(\vect{x},\vect{x}') = \mat{\Sigma}^{(0)}(\vect{x},\vect{x}')$.
			\item For $h\in[L - 1]$ and $(i,j,i',j') \in [\nnw] \times [\nnh] \times [\nnw] \times [\nnh]$, we define
			\begin{align*}
			\left[\mat{\Theta}^{(h)}(\vect{x},\vect{x}')\right]_{ij,i'j'} = \tr\left(\left[\dot{\mat{K}}^{(h)}(\vect{x},\vect{x}')\odot\mat{\Theta}^{(h-1)}(\vect{x},\vect{x}')+\mat{K}^{(h)}(\vect{x},\vect{x}')\right]_{D_{ij,i'j'}}\right) + {\gamma^2} .\label{eqn:vanila_cnn_gradient_kernel}
			\end{align*}
			\item Finally, define 
			\begin{align*}
			\mat{\Theta}^{(L)}(\vect{x},\vect{x}') =\dot{\mat{K}}^{(L)}(\vect{x},\vect{x}')\odot\mat{\Theta}^{(L-1)}(\vect{x},\vect{x}')+\mat{K}^{(L)}(\vect{x},\vect{x}').
			\end{align*}
			\end{enumerate}

\section{Additional Definitions and Proof of Theorem~\ref{thm:gap_equivalence}}
\label{sec:proofs}
\begin{defn}[Group of Operators]
	$(\mathcal{G}, \circ)$ is a \emph{group} of operators, if and only if
	\begin{enumerate}
		\item Each element $g\in \mathcal{G}$ is an operator: $\mathbb{R}^{P\times Q\times C}\to \mathbb{R}^{P\times Q\times C}$;
		\item $\forall g_1,g_2\in\mathcal{G}, g_1\circ g_2\in \mathcal{G}$, where $(g_1\circ g_2)(\vx)$ is defined as $g_1(g_2(\vx))$.
        \item $\exists e \in \mathcal G$, such that $\forall g \in \mathcal G$, $e\circ g = g\circ e = g$.
        \item $\forall g_1 \in \mathcal G$, $\exists g_2 \in \mathcal G$, such that $g_1 \circ g_2 = g_2 \circ g_1 = e$. We say $g_2$ is the inverse of $g_1$, namely, $g_2 = g_1^{-1}$.
	\end{enumerate}
\end{defn}

\begin{proof}[Proof of Theorem~\ref{thm:gap_equivalence}]
	Since we assume $\rmK^\gG_\rmX$ and $\rmK_{\rmX_\gG}$ are invertible, both $\valpha$ and $\vtalpha$ are uniquely defined. Now we claim $\vtalpha_g = \{\talpha_{i,g}\}_{i\in [N]}\in \mathbb{R}^N$  is equal to $\frac{\valpha}{|\gG|}$ for all $g\in\gG$.

By the equivariance of $\rmK$ under $\gG$, for all $j\in [N]$ and $g'\in \gG$,
\begin{align*}
\sum_{i\in[N], g\in\gG} \frac{\alpha_i}{|\gG|}\rmK(g'(\vx_j), g(\vx_i)) =&~ \sum_{i\in[N], g\in\gG} \frac{\alpha_i}{|\gG|}\rmK((g^{-1}\circ g')(\vx_j), \vx_i)\\
 =&~ \sum_{i\in[N]} \alpha_i\E_{g\in\gG}\rmK(g(\vx_j), \vx_i)\\
 =&~ \sum_{i\in [N]} \alpha_i\rmK^\gG(\vx_j, \vx_i) \\
 =&~ y_j.
\end{align*}
Note that $\vtalpha$ is defined as the unique solution of $\rmK_{\rmX_\gG} \vtalpha = \vy_\gG$.

Similarly, we have
\begin{align*}
 \sum_{i\in[N], g\in\gG} \frac{\alpha_i}{|\gG|}\rmK(\vx', g(\vx_i)) =
 \sum_{i\in[N]} \alpha_i\E_{g\in\gG}\rmK(g^{-1}(\vx'), \vx_i)
 = \sum_{i\in [N]} \alpha_i\rmK^\gG(\vx', \vx_i).
\end{align*}
\end{proof}

\section{Equivalence Between \lap~and Box Blur Layer.}
\label{sec:bblur}
For a \cnn~with a box blur layer before the final fully-connected layer, the final output is defined as $f(\params,\vect{x}) = \sum_{\alpha=1}^{\nnc^{(L)}} \left\langle \mat{W}_{(\alpha)}^{(L+1)},\boxblur\left(\vect{x}_{(\alpha)}^{(L)}\right)\right\rangle$,
	where $\vect{x}^{(L)}_{(\alpha)} \in \mathbb{R}^{\nnw \times \nnh}$, and $\mat{W}_{(\alpha)}^{(L+1)} \in \mathbb{R}^{\nnw \times \nnh}$ is the weight of the last fully-connected layer.
	
Now we establish the equivalence between $\boxblur$~and \lap~on \cntk.
The equivalence on \cnngp~can be derived similarly.
Let
 $\mat{\Theta}_{\boxblur}\left(\vect{x},\vect{x}'\right) \in \mathbb{R}^{[\nnw] \times [\nnh] \times [\nnw] \times [\nnh]}$ be the \cntk~kernel of $\boxblur\left(\vect{x}_{(\alpha)}^{(L)}\right)$.
 Since $\boxblur$~is just a linear operation, we have
\[
  \left[\mat{\Theta}_{\boxblur}\left(\vect{x},\vect{x}'\right)\right]_{i,j,i',j'}
  =  \frac{1}{(2c+ 1)^4} \sum_{\Delta_i,\Delta_j,\Delta_i',\Delta_j'\in[-\lapc,\lapc]^4}\left[\mat{\Theta}^{(L)}\left(\vect{x},\vect{x}'\right)\right]_{i + \Delta_i,j + \Delta_j,i' + \Delta_i',j'+ \Delta_j'}.\]
By the formula of the output kernel value for \cntk~without \gap,
we obtain \[\trace{\mat{\Theta}_{\boxblur}\left(\vect{x},\vect{x}'\right)} = \frac{1}{(2c+ 1)^4}\sum_{\Delta_i, \Delta_{i}', \Delta_j, \Delta_{j}' \in[-\lapc,\lapc]^4}\sum_{i, j\in [P] \times [Q]} \left[\mat{\Theta}(\vect{x},\vect{x'})\right]_{i + \Delta_i,j + \Delta_j,i + \Delta_i',j + \Delta_j'}.\]

\section{Setting of the Experiment in Section~\ref{sec:exp_cnn}}\label{sec:exp_cnn_setting}
The total number of training epochs is 80, and the learning rate is 0.1 initially, decayed by 10 at epoch 40 and 60 respectively.
The momentum is 0.9 and the weight decay factor is 0.0005.
In Figure~\ref{fig:concave_curve}, the blue line reports the average test accuracy of the last 10 epochs,
while the red line reports the best test accuracy of the total 80 epochs.
Each experiment is repeated for 3 times.
We use circular padding for both convolutional layers and the $\boxblur$ layer. 
The last data point with largest $x$-coordinate reported in Figure~\ref{fig:concave_curve} corresponds to \gap.

\end{document}